\newcommand{\weight}[1]{\ensuremath{W \! \left( #1 \right)}}
\newcommand{\NP}{\ensuremath{\mathsf{NP}}}
\newcommand{\MaxSAT}{\ensuremath{\mathsf{MaxSAT}}}
\newcommand{\MaxHS}{\ensuremath{\mathsf{MaxHS}}}
\newtheorem{construction}[theorem]{Construction}
\newcommand{\Rule}{\ensuremath{{\mathcal{R}}}}
\newcommand{\cE}{\mathcal{E}}
\let\oldbibitem\bibitem
\def\bibitem{\vfill\oldbibitem}
\DeclareMathAlphabet\mathbfcal{OMS}{cmsy}{b}{n}
\newcommand{\ra}[1]{\renewcommand{\arraystretch}{#1}}
\newcommand{\SLIC}{\ensuremath{\mathcal{MLIC}}}
	\title{MLIC: A MaxSAT-Based framework for learning interpretable classification rules\thanks{The names of authors are sorted alphabetically by last name and the order does not reflect contribution}}
	\author{Dmitry Malioutov\inst{1} \and
		Kuldeep S. Meel\inst{2}
	}
	\institute{
	 T.J. Watson IBM Research center\\
	 \email{dmal@alum.mit.edu}\\
	 \and
 	School of Computing, National University of Singapore\\
 	\email{meel@comp.nus.edu.sg}\\
 }
\begin{document}
	\maketitle	
\begin{abstract}
The wide adoption of machine learning approaches in the industry, 
government, medicine and science has renewed the 
interest in interpretable machine learning: many decisions
are too important to be delegated to black-box techniques 
such as deep neural networks or kernel SVMs. Historically, problems
of learning interpretable classifiers, including classification rules 
or decision trees, have been approached by greedy heuristic methods 
as essentially all the exact optimization formulations are NP-hard.  
Our primary contribution is a MaxSAT-based framework, called {\SLIC},
which allows principled search for interpretable classification 
rules expressible in propositional logic. Our approach benefits 
from the revolutionary advances in the constraint satisfaction 
community to solve large-scale instances of such problems. In 
experimental evaluations over a collection of benchmarks arising 
from practical scenarios we demonstrate its effectiveness: we show 
that the formulation can solve large classification problems 
with tens or hundreds of thousands of examples and thousands of 
features, and to provide a tunable balance of accuracy vs. 
interpretability.  Furthermore, we show that in many problems 
interpretability can be obtained at only a minor cost in accuracy.\\

 The primary objective of the paper is to show that recent advances in 
the MaxSAT literature make it realistic to find optimal (or very high quality
near-optimal) solutions to large-scale classification problems. We also
hope to encourage researchers in both interpretable classification 
and in the constraint programming community to take it further and  
develop richer formulations, and bespoke solvers attuned to 
the problem of interpretable ML. 
\end{abstract}



\section{Introduction}

The last decade has witnessed an unprecedented adoption of machine 
learning techniques to make sense of available data and make predictions 
to support decision making for a wide 
variety of applications ranging from health-care analytics to 
customer churn predictions, movie recommendations and macro-economic policy. The focus in the machine learning literature has been on 
increasingly sophisticated systems with the paramount goal of 
improving the accuracy of their predictions at the cost of making 
such systems essentially black-box. While in certain 
tasks such as ad predictions, accuracy is the main 
objective, in other domains, e.g., in legal, medical, and 
government, it is essential that the human decision makers who 
may not have been trained in machine learning can interpret and 
validate the predictions \cite{freitas2014comprehensible,varshney_dichotomy}.

The most popular interpretable techniques that tend to be adopted and trusted
by decision makers include classification rules, decision trees, and decision lists \cite{Cohen1995,quinlan2014,breiman1984classification,Rivest1987}. In particular, decision rules with a small number of Boolean clauses tend to be the most interpretable. Such models can be used both to learn interpretable models from 
the start, and also as proxies that provide post-hoc explanations to pre-trained 
black-box models \cite{craven1996_black_box_trees,andrews1995survey_rule_induction}. 

On the theoretical front, the problem of rule learning was shown to be computationally intractable
\cite{Valiant1985}. Consequently, the earliest practical efforts such as decision list and decision tree approaches relied on a combination of heuristically chosen optimization objectives and greedy algorithmic techniques, and the size of the rule was controlled by either early stopping or ad-hoc rule pruning. Only recently there have been some formulations that attempt to balance the 
accuracy and the size of the rule in a principled optimization objective either through 
combinatorial optimization, linear programming (LP) relaxations, submodular optimization, or Bayesian methods \cite{BertsimasCR2012,marchand2002SetCoverMachine,malioutov2013exact}\cite{boros2000-LogicalAnalysisOfData,wang2015or} as we review in Section \ref{sec:relatedWork}.

Motivated by the significant progress in the development of combinatorial solvers (in particular, {\MaxSAT}), we ask: {\em can we design a combinatorial framework to efficiently construct interpretable classification rules that takes advantage of these recent advances?} The primary contribution of this paper is to present a combinatorial framework that enables 
a precise control of accuracy vs. interpretability, and to verify that the computational advances in the {\MaxSAT}community can make it practical to solve large-scale classification problems. 

In particular, this paper makes following contributions:
\begin{enumerate}
	\item A {\MaxSAT}-based framework, {\SLIC}, that provably trades off accuracy vs.     interpretability of the rules
	\item A prototype implementation of {\SLIC} based on {\MaxSAT} that is capable of finding optimal (or high-quality near-optimal) classification rules from modern large-scale data-sets
	\item We show that in many classification problems interpretability can be achieved at only a minor loss of accuracy, and furthermore, {\SLIC}, which specifically looks for interpretable rules, can learn from much fewer samples than black-box ML techniques.
\end{enumerate}

Furthermore, we hope to share our excitement with applications of constraint programming/{\MaxSAT} in Machine Learning, and to encourage researchers in both interpretable classification and in the CSP/SAT communities to consider this topic further: both in developing new SAT-based formulations for interpretable ML, and in designing bespoke solvers attuned to the problem of interpretable ML.  

The rest of the paper is organized as follows: We discuss notations and preliminaries in Section~\ref{sec:prelims}. 
We then present {\SLIC}, which is the primary 
contribution of this paper, in Section~\ref{sec:slic} and follow up with experimental setup and results over 
a large set of benchmarks in Section~\ref{sec:evaluation}. We then discuss related work in Section~\ref{sec:relatedWork} and finally conclude in Section~\ref{sec:conclusion}.


\section{Preliminaries}\label{sec:prelims}
 We use capital  boldface letters such as $\mathbf{X}$ to denote matrices while lower boldface letters $\mathbf{y}$ are reserved for vectors/sets. For a matrix $\mathbf{X}$, $\mathbf{X}_i$  represents i-th row of $\mathbf{X}$ while for a vector/set $\mathbf{y}$, $y_i$ represents i-th element of $\mathbf{y}$. 
 
Let $F$ be a Boolean formula and  $\mathbf{b} = \{b_1,b_2,\cdots b_n \}$ be the set of variables appearing in $F$. A literal is a variable ($b_i$) or its complement($\neg b_i$).  A \emph{satisfying assignment} or a
\emph{witness} of $F$ is an assignment of variables in $\mathbf{b}$ that makes
$F$ evaluate to \emph{true}.  If $\sigma$ is an assignment of variables and $b_i \in \mathbf{b}$, we use $\sigma(b_i)$ to denote the value assigned to
$b_i$ in $\sigma$. $F$ is in Conjunctive Normal Form (CNF) if $F := C_1 \wedge C_2 \cdots C_m$, where each clause $C_i$ is represented as disjunction of literals. We use $|C_i|$ to denote the number of literals in $C_i$.   For two vectors $\mathbf{u}$ and $\mathbf{v}$ over propositional variable/constants, we define $\mathbf{u} \vee \mathbf{v} = \bigvee_{i} (u_{i} \wedge v_{i})$, where $u_{i}$ and $v_{i}$ denote variables/constants at $i$-th index of $\mathbf{u}$ and $\mathbf{v}$ respectively. In this context, note that the operation $\wedge$ between a variable and a constant follows standard interpretation, i.e. $0 \wedge b = 0$ and $1 \wedge b = b$ .

We consider standard binary classification, where we are given a collection of 
training samples $\{ \mathbf{X}_i, y_i \}$ where each vector $\mathbf{X}_i \in \mathcal{X}$ contains valuation of the features $\mathbf{x} = \{x^1, x^2, \cdots x^m\}$ for sample $i$, and $y_i \in \{0,1\}$ is the binary label for sample $i$.
A classifier {\Rule} is a mapping that takes in a feature vector $\mathbf{x}$ and return a class $y$, i.e. $y = \Rule(\mathbf{x})$. The goal is not 
only to design $\Rule$ to approximate our training set, but also to generalize to unseen samples arising from the same distribution. 
In this work, we restrict $\mathbf{x}$ and $y$ to be Boolean\footnote{We discuss in Section~\ref{sec:slic} that such a restriction can be achieved without loss of generality} and focus on classifiers that can be expressed compactly in Conjunctive Normal Form (CNF). We use $C_i$ to denote the $i$th clause of $\Rule$. Furthermore, we use $|\Rule|$ to denote the sum of the counts of literals in all the clauses, i.e. $|\Rule| = \Sigma_i |C_i|$.


In this work, we focus on weighted variant of CNF wherein a weight function is defined over clauses. For a clause $C_i$ and 
weight function {\weight{\cdot}}, we use {\weight{C_i}} to denote the weight of clause $C_i$. We say that a clause $C_i$ is hard if $\weight{C_i} = -\infty$, otherwise $C_i$ is called as soft clause.  To avoid notational clutter, we overload {\weight{\cdot}} to denote the weight of an assignment or clause, depending on the context. We define weight of an assignment $\sigma$ as the sum of weight of clauses that $\sigma$ does not satisfy. Formally, $\weight{\sigma} = \Sigma_{i | \sigma \not\models C_i} \weight{C_i}$.

Given $F$ and weight function $\weight{\cdot}$, the problem of {\MaxSAT} is to find an assignment $\sigma^*$ that has the maximum weight, i.e.  $\sigma^* = \MaxSAT(F,W)$ if $\forall \sigma \neq \sigma^*, \weight{\sigma*} \geq \weight{\sigma}$. Our formulation will have negative clause weights, hence \MaxSAT corresponds to satisfying as many clauses as possible, and picking the weakest clauses among the unsatisfied ones. Note that the above formulation is different from the typical definition of {\MaxSAT} but the difference is only syntactic. Borrowing terminology of community focused on developing {\MaxSAT} solvers, we are solving a partial weighted {\MaxSAT} instance wherein we mark all the clauses with $-\infty$ weight as hard and negate weight of all the other clauses and ask for a solution that optimizes the partial weighted {\MaxSAT} formula.  The knowledge of inner working of {\MaxSAT} solvers and encoding of our representation into weighted {\MaxSAT} is not required for this paper and we defer the details to release of source code post-publication.



%


\section{{\SLIC}: MaxSAT-based Learning of Interpretable Classifiers}
\label{sec:slic}
We now discuss the primary technical contribution of this paper, {\SLIC}: {\MaxSAT}-based Learning of Interpretable Classifiers. We first describe a metric for interpretability of CNF rules. Since our formulation employs binary features, we discuss how non-binary features such as categorical and continuous features can be represented as binary features. We then move on to formulate the problem of learning interpretable classification rules as a {\MaxSAT} query and provide a proof of its theoretical soundness regarding controlling sparsity of the rules. As discussed in Section~\ref{sec:relatedWork}, prior work does not provide a sound procedure for controlling sparsity and accuracy. We then discuss the representational power of our CNF framework -- in particular, we demonstrate that the proposed framework generalizes to handle complex objective function and rules in forms other than CNF. 

\subsection{Balancing Accuracy and Intrepretability}

While in general interpretability may be hard to define precisely, in the context of decision rules, an effective proxy is merely the count of clauses or literals used in the rule. Rules involving few clauses with few literals are natural for humans to evaluate and understand, while complex rules involving hundreds of clauses will not be interpretable even if
the individual clauses are. In addition to interpretability, such sparsity also controls model complexity and gives a handle of the generalization error.\footnote{The framework proposed in this paper allows generalization to other forms of rules, as we discuss in Section~\ref{sec:generalization}.}

First, suppose that there exists a rule $\Rule$ that perfectly classifies all the examples, i.e. 
$\forall{i}, y_i = \Rule(\mathbf{X}_i)$. Among all possible functions that satisfy this we would like to
find the most interpretable (sparse) one:
\begin{align*}
\min_{\Rule}   |{\Rule}|  ~~ \mbox{ such that } \Rule(\mathbf{X}_i) = y_i, ~~\forall{i}
\end{align*}

Since most ML datasets do not allow perfect classification, we introduce a penalty on 
classification errors. We balance the two terms by a parameter $\lambda$, where large $\lambda$ 
gives more accurate but more complex rules, and smaller $\lambda$ gives smaller rules at the 
cost of reduced accuracy. Let $\cE_{\Rule}$ be the set of examples on which our classifier 
$\Rule$ makes an error, then our objective is\footnote{Cost-sensitive classification is defined analogously 
	by allowing a separate parameter for false positives and false negatives.}:

\begin{equation}\label{eq:obj-sparsity}
\min_{\Rule} |{\Rule}| +  \lambda | \cE_{\Rule} |  ~~ \mbox{ such that } {\Rule}(\mathbf{X}_i) = y_i, ~~\forall{i \notin \cE_{\Rule}}
\end{equation}


\subsection{Discretization of Features}


In our {\MaxSAT}-based formulation, we focus on learning rules based on Boolean variables.  We do also allow categorical and continuous features for our classifier,  which are pre-processed before being presented to the \MaxSAT-formulation.   To handle categorical features one may use the common `one-hot' encoding, where a Boolean vector variable is introduced with the cardinality equal to the number of categories.  For example a categorical feature
with values {'red', 'green', 'blue'} would get converted to three binary variables,  which take values 100, 010, and 001
for the three categorical values.    [[[ If you think this sentence is obvious -- please drop ]]]

For continuous features, we introduce discretization, by comparing feature values to a collection of thresholds. The thresholds may be chosen for example based on quantiles of their distribution, or alternatively, on uniform partition of the range of feature values. Specifically, for a continuous feature $x^{c}$ we consider a number of thresholds $\{ \tau_k \}$ and define two separate  Boolean features  $I[x^{c} \ge \tau_k ]$ and $I[x^{c} < \tau_k ]$  for each $\tau_k$.   The number of thresholds may vary by feature.  Thus, each continuous feature is represented using a collection of $2 q$ Boolean features, where $q$ is the number of thresholds.

In principle, one could use all the values occurring in the data as thresholds, and this would be equivalent to the original continuous features.  In practice, however, such granularity is typically not necessary, and a handful of thresholds could be used, e.g., age-groups for each $5$ years to discretize a continuous age variable. This typically leads to only a very minor (if any) loss in accuracy, and in fact improves the presentations and understanding of the rules to human users.
In our experiments, we used  10 thresholds based on the quantiles of the feature distribution (10-th, 20-th, ... 100-th percentile), unless the number of unique values of the feature was less than 10, in which case we kept all of them.

We note that we could easily define arbitrary other Boolean functions of continuous or categorical variables within our framework.  For example,  categorical variables  with many possible values (e.g.  states or countries) may be grouped into  more interpretable coarser units ( regions  or continents).  Such groupings are application specific and wpuld typically require relevant domain knowledge. They could perhaps be learned from data, but this is outside the scope of the current paper.

\subsection{Transformation to Max-SAT query}
We now describe our Max-SAT formulation for learning interpretable rules. {\SLIC} takes in four inputs: (i) a (0,1)-matrix ${\mathbf{X}}$ of dimension $n\times m$ describing values of all $m$ features for $n$ samples with $\mathbf{X}_i$ corresponding to feature vector $\mathbf{x} = \{x^1, x^2, \cdots x^m\}$ for sample $i$, (ii)  (0,1)-vector $\mathbf{y}$ containing class labels $y_i$ for sample $i$, (iii) k, the desired number of clauses in CNF rule, (iv) the regularization parameter $\lambda$.  Consequently, {\SLIC} constructs a {\MaxSAT} query and invokes a $\MaxSAT$ solver to compute the underlying rule {\Rule} as we now describe.

The key idea of {\SLIC} is to define a {\MaxSAT} query over $k \times m$ propositional variables, denoted by $\{ b^{1}_{1}, b^{2}_{1}, \cdots b^{m}_{1} \cdots b^{m}_{k}\}$, such that  every truth assignment $\sigma$ defines a k-clause CNF rule $\Rule$, where feature $x^{j}$ appears in clause $\Rule_{i}$ if $\sigma(b^{j}_{i}) = 1$. Corresponding to every sample $i$, we introduce a noise variable $\eta_i$ that is employed to distinguish whether the labeling for sample $i$ should be considered as noise or not. Let $\mathbf{B}_{i} = \{b_{i}^{j} \mid j \in [m] \}$. 

The Max-SAT query constructed by {\SLIC} consists of the following three sets of constraints:
\begin{enumerate}
	\item $N_i:= (\neg \eta_i ); \qquad \qquad \weight{N_i} = -\lambda$
	\item $V_i^j:= (\neg b_i^j); \qquad \qquad  \weight{V_i^j} = -1$
	\item $D_i:= (\neg \eta_i \rightarrow ( y_i \leftrightarrow \bigwedge_{l=1}^{k} ({\mathbf{X}_{i}} \vee {\mathbf{B}_{l}})));    \weight{D_i} = -\infty$
\end{enumerate}
Please refer to Section~\ref{sec:prelims} for the interpretation of  $({\mathbf{X}_{i}} \vee {\mathbf{B}_{j}})$.
Finally, the set of constraints $Q^k$ constructed by {\SLIC} is defined as follows:
\begin{equation}
Q^k := \bigwedge_{i=1}^{n} N_i \wedge \bigwedge_{i=1, j = 1}^{i=k,j=m} V_i^j \wedge \bigwedge_{i=1}^{n} D_i
\end{equation}

Note that the elements of $\mathbf{X}_{i}$ and $y_i$ are not variables but constants whose values (0 or 1) are provided as inputs. Therefore, the set of variables for $Q^k$ is  $\{\eta_1, \eta_2, \cdots, \eta_n, $ $ b^{1}_{1}, b^{2}_{1}, \cdots b^{m}_{1} \cdots b^{m}_{k} \}$. We now explain the intuition behind the design of $Q^k$. 

We assign a weight of $-\lambda$ to every $N_i$ as we would like to satisfy as many $N_i$, i.e. falsify as many $\eta_i$ as possible. Similarly, we assign a weight of $-1$ to every clause $V_i^j$ as we are, again, interested in sparse solutions (i.e., ideally, we would prefer as many $V_i^j$ to be satisfied as possible).  Every clause $D_i$ can be read as follows: if $\eta_i$ is assigned to false, i.e. sample $i$ is not considered as noise, then $\mathbf{y}_i = \Rule$. As noted in Section~\ref{sec:prelims}, equivalent representation of the $\weight{\cdot}$, as described above, for {\MaxSAT} solvers involves usage of hard clauses.  

Next, we extract $\Rule$ from the solution of $Q^k$ as follows. 
\begin{construction}
	Let $\sigma^* = \MaxSAT(Q^k,W)$, then $x^j \in {\Rule}_i$ iff $\sigma^*(b_{i}^{j}) = 1$.
\end{construction}

Before proceeding further, it is important to discuss CNF encodings for the above sets of constraints. The constraints arising from $N_i$ and $V_i$ are unit clauses and do not require further processing. Furthermore, note that $\mathbf{y}_i$ is already known and is a constant. 
Therefore, when $\mathbf{y}_i$ is 1, the constraint $D_i$ can be directly 
encoded as CNF by using equivalence of $(a \rightarrow b) \equiv (\neg a \vee b)$. Finally, when $\mathbf{y}_i$ is 0, we use Tseitin encoding wherein 
we introduce an auxiliary variable $z_i^j$ corresponding to each clause $({\mathbf{X}_{i}} \vee \mathbf{B}_{j})$. Formally, we replace $D_i:= (\neg \eta_i \rightarrow (\bigvee_{j=1}^{k} \neg ({\mathbf{X}_{i}} \vee \mathbf{B}_{j})))$ with $\bigwedge_{j=0}^{k} D_{i}^{j}$ where $D_{i}^{0} := (\neg \eta_i \rightarrow  \bigvee_{j} z_{i}^{j}))$, and  $D_{i}^{j} := (z_{i}^{j} \rightarrow \neg ({\mathbf{X}_{i}} \vee \mathbf{B}_{j})$. 
Furthermore, $\weight{D_i^j} = -\infty$. 
The following lemma establishes the theoretical soundness of parameter $\lambda$.

\begin{lemma}
	For all $\lambda_2 > \lambda_1 > 0$, if $\Rule_1 \gets \SLIC(\mathbf{X},\mathbf{y},k,\lambda_1)$ and $\Rule_2 \gets \SLIC(\mathbf{X},\mathbf{y},k,\lambda_2)$, then $|{\Rule_1}| \leq |{\Rule_2}|$ and $\cE_{\Rule_1} \geq \cE_{\Rule_2}$.  
\end{lemma}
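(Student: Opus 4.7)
The plan is to reduce the lemma to a textbook exchange argument for parametric bicriteria optimization. The first step is to identify, from the weight assignments given in the construction of $Q^k$, exactly what objective the {\MaxSAT} call is minimizing. The hard clauses $D_i$ force $\eta_i=1$ whenever the rule defined by $\{b_i^j\}$ misclassifies sample $i$, and since $\lambda>0$ makes each spurious $\eta_i=1$ strictly costly, in any optimum $\eta_i = 1$ iff $i\in\cE_\Rule$. Hence the total weight of unsatisfied soft clauses is exactly $|\Rule| + \lambda\,|\cE_\Rule|$: the $V_i^j$ clauses contribute the literal count and the $N_i$ clauses contribute $\lambda$ times the error count. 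Thus $\SLIC(\mathbf{X},\mathbf{y},k,\lambda)$ returns a rule $\Rule$ attaining $\min_\Rule\bigl(|\Rule| + \lambda\,|\cE_\Rule|\bigr)$ over all $k$-clause CNF rules on the $m$ features.

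Next I would write down the two optimality inequalities. By optimality of $\Rule_1$ at $\lambda_1$ and of $\Rule_2$ at $\lambda_2$, using each rule as a feasible competitor for the other,
\begin{align*}
|\Rule_1| + \lambda_1\,|\cE_{\Rule_1}| &\;\leq\; |\Rule_2| + \lambda_1\,|\cE_{\Rule_2}|,\\
|\Rule_2| + \lambda_2\,|\cE_{\Rule_2}| &\;\leq\; |\Rule_1| + \lambda_2\,|\cE_{\Rule_1}|.
\end{align*}
Adding the two inequalities cancels $|\Rule_1|$ and $|\Rule_2|$ and yields $(\lambda_2-\lambda_1)\bigl(|\cE_{\Rule_1}|-|\cE_{\Rule_2}|\bigr)\geq 0$. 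Because $\lambda_2>\lambda_1$, this gives $|\cE_{\Rule_1}|\geq|\cE_{\Rule_2}|$, which is the second conclusion. Substituting this back into the first inequality gives $|\Rule_1|\leq|\Rule_2| + \lambda_1(|\cE_{\Rule_2}|-|\cE_{\Rule_1}|)\leq |\Rule_2|$, which is the first conclusion.

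The only real obstacle is the first step: justifying that the {\MaxSAT} optimum of $Q^k$ is in one-to-one correspondence with a minimizer of $|\Rule|+\lambda|\cE_\Rule|$. Two small points need care: (i) the $D_i$ clauses have weight $-\infty$ and must be satisfied, so the $\eta_i$ truly serve as indicators of misclassification once the rule is fixed; and (ii) setting any extra $\eta_i$ to true on a correctly classified sample strictly worsens the objective by $\lambda>0$, so such configurations cannot be optimal. Together these ensure the soft-clause cost equals exactly $|\Rule|+\lambda\,|\cE_\Rule|$, after which the exchange argument above is essentially mechanical. If {\MaxSAT} returns one of several tied optima, the argument still goes through with $\leq$ in place of strict inequalities, matching the weak inequalities in the statement.
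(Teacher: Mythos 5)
Your proof is correct and rests on the same core idea as the paper's: compare $\Rule_1$ and $\Rule_2$ via the two cross-optimality inequalities for the objective $|\Rule| + \lambda\,|\cE_{\Rule}|$ and combine them using $\lambda_2 > \lambda_1$. If anything, your version is tidier and more complete --- the paper runs the exchange as a contradiction argument and only concludes $|\Rule_1| \leq |\Rule_2|$, whereas you also derive $\cE_{\Rule_1} \geq \cE_{\Rule_2}$ explicitly and justify why the {\MaxSAT} optimum of $Q^k$ really does minimize $|\Rule| + \lambda\,|\cE_{\Rule}|$ (a step the paper merely asserts).
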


\begin{proof}
	First, note that construction of $Q^k$ depends only on $\mathbf{X}$ and $\mathbf{y}$. Furthermore, the parameter $\lambda$ influences only the associated weight function. We denote weight functions corresponding to $\lambda_1$ and $\lambda_2$ as $W_{\lambda_1}$ and $W_{\lambda_2}$ respectively. Furthermore, let $\sigma_1 = \MaxSAT(Q^k, W_{\lambda_1})$ and $\sigma_2 = \MaxSAT(Q^k, W_{\lambda_1})$. If $\sigma_1 = \sigma_2$, the lemma trivially holds. We now complete proof by contradiction argument for the case when $\sigma_1 \neq \sigma_2$. 
	
	Let $|{\Rule_1}| > |{\Rule_2}|$. As $\sigma_1 \neq \sigma_2$, we have $W_{\lambda_2} (\sigma_1) \leq W_{\lambda_2} (\sigma_2)$. Since $W_{\lambda} (\sigma) = |{\Rule}| + \lambda \cE_{\Rule}$, where $\Rule$ is extracted from $\sigma$ as stated above. Therefore, we have $\lambda_2(\cE_{\Rule_2} - \cE_{\Rule_1}) \geq |\Rule_1| - |{\Rule_2}|$. But we also have $W_{\lambda_1} (\sigma_1) \leq W_{\lambda_1} (\sigma_2)$, which implies that $\lambda_1(\cE_{\Rule_2} - \cE_{\Rule_1}) \leq |{\Rule_1}| - |{\Rule_2}|$. Since $\lambda_1 > \lambda_2$, we have contradiction. Therefore, it must be the case that $|{\Rule_1}| \leq |{\Rule_2}|$. 
	
\end{proof}

\subsection{Illustrate Example}
We illustrate our encoding with the help of a toy example. Let $n = 2, m = 3, k = 2$ and $X = \begin{bmatrix}
1 & 0  & 1\\
0 & 1  & 1\\
\end{bmatrix}$ and $y = \begin{bmatrix}
0 \\ 1
\end{bmatrix}$. Then we have following clauses: \\[1em]
$N_1 := (\neg \eta_1)$; $\qquad \quad$ $N_2 := (\neg \eta_2)$; \\[1em]
$V_{1}^{1} = (\neg b_1^1)$; $\qquad \qquad$ $V_{1}^{2} = (\neg b_1^2)$; $\qquad \qquad$ $V_{1}^{3} = (\neg b_1^3)$;\\[1em]
$V_{2}^{1} = (\neg b_2^1)$; $\qquad \qquad$ $V_{2}^{2} = (\neg b_2^2)$; $\qquad \qquad$ $V_{2}^{3} = (\neg b_2^3)$;  \\[1em]
$D_1 := (\neg \eta_1 \rightarrow (\neg (b_1^1 \vee b_1^3) \vee \neg(b_2^1 \vee b_2^3))$; \\[1em]
$D_2 := (\neg \eta_2 \rightarrow ((b_1^2 \vee b_1^3) \wedge (b_2^2 \vee b_2^3) )$
\subsection{Beyond CNF Rules}
While CNF formulas are general enough to express every Boolean formula, the length of representation may not be polynomial size. Therefore, one might wonder if we can extend {\SLIC} to learn rules in other canonical forms as well. In fact, early CSP based approaches to rule learning focused on rules in DNF form. We now show that with a minor change, we are able to learn rules expressible in DNF.  Suppose that we are interested in learning a rule $S$ that is expressible in DNF, such that $y = S(\mathbf{x})$, where S is a DNF formula. We note that $(\mathbf{y} = S(\mathbf{x})) \leftrightarrow \neg (y = \neg S (\mathbf{x}))$. And if $S$ is a DNF formula, then $\neg S$ is a CNF formula. Therefore, to learn rule $S$, we simply call {\SLIC} with $\neg \mathbf{y}$ as input and negate the learned rule.

\subsection{Complex Objective Functions}\label{sec:generalization}
We now discuss how {\SLIC} can be easily extended to handle complex objective functions. The objective function for {\SLIC} as defined in Equation~\ref{eq:obj-sparsity} treats all features equally. In some cases, the user might prefer rules that contain certain features. Such an extension is fairly easy to achieve as we need only to change the weight function corresponding to clauses $V_i^j$. Furthermore, in certain cases, one might want to minimize the total number of different features across different clauses rather than minimize the total number of terms. Such an extension is fairly easy to handle as we can simply replace  $\bigwedge_{j=1}^{k} V_i^j$ with $\hat{V_i}$ where $\hat{V_i} = (\bigvee_{j=1}^{k} \neg b_{i}^{j})$. It is worth noting that the proposed modifications impact only the {\MaxSAT} query and does not require any modifications to the underlying {\MaxSAT} solver. {\em We believe that such a separation is a key strength of {\SLIC} as it separates modeling and solving completely. }
%


\section{Evaluation}\label{sec:evaluation}
To evaluate the performance of {\SLIC}, we implemented a prototype implementation in Python that employs {\MaxHS}~\cite{DB11} to handle MaxSAT instances. We also experimented with LMHS~\cite{BSJ15}, another state of the art MaxSAT solver and MaxHS outperformed LMHS for our benchmarks~\footnote{A detailed evaluation among different MaxSAT solvers is beyond the scope of this work and left for future work}. We conducted an extensive set of experiments on diverse publicly available benchmarks, seeking to answer the following questions\footnote{The source code of {\SLIC} and benchmarks can be viewed at \protect \url{https://github.com/meelgroup/mlic}}:
\begin{enumerate}
	\item Do advancements in {\MaxSAT} solving enable {\SLIC} to be run with datasets involving tens of thousands of variables with thousands of binary features?
	\item How does the accuracy of {\SLIC} compare to that of state of the art but typically non-interpretable classifiers? 
	\item How does the accuracy of {\SLIC} vary with the size of training set?
	\item How does the accuracy of {\SLIC} vary with $\lambda$?
	\item How does the size of learnt rules of {\SLIC} vary with $\lambda$?
	\end{enumerate}
	
	In summary, our experiments demonstrate that {\SLIC} can handle datasets involving tens of thousands of variables with thousands of binary features. Furthermore, {\SLIC} can generate rules that are not only interpretable but with accuracy comparable to that of other competitive classifiers, which often produce hard to interpret rules/models. 
	We demonstrate that {\SLIC} is able to achieve sufficiently high accuracy with very few samples. 
	
	
	\begin{table*}
		{\scriptsize
			\centering
			\ra{2}
			\begin{tabular}{@{}cccccccccc@{}}\toprule
				\textbf{Dataset} & \textbf{Size} & \textbf{\# Features}& \textbf{RIPPER} & \textbf{Log Reg} & \textbf{NN} & \textbf{RF} & \textbf{SVC} & \textbf{{\SLIC}} 
				\\ \hline
				TomsHardware& 28170 & 830 &
		\shortstack{ 0.968\\(92.8)} & \shortstack{0.976\\(0.2)} & \shortstack{0.977\\(3.4)} & \shortstack{ 0.976\\(64.9 )} & \shortstack{     Timeout}
			 & \shortstack{0.969\\(2000)} 
				\\ \hline
				Twitter& 49990 & 1050 &
				\shortstack{ 0.938\\(187.3)} & \shortstack{0.963\\(0.2)} & \shortstack{0.965\\(6.8)} & \shortstack{0.962\\(250.9 )} & \shortstack{0.962\\(1010.0)} 
				& \shortstack{0.958\\ (2000)} 
				\\ \hline
				adult-data& 32560 & 262 &
				 \shortstack{  0.852\\(0.5)} & \shortstack{0.801\\(0.3)} & \shortstack{0.866\\(3.0)} & \shortstack{ 0.844\\(41.8 )} & \shortstack{       Timeout}
				& \shortstack{0.755\\ (2000)}
				\\ \hline
				credit-card&  30000 & 334&
				\shortstack{  0.811\\(0.7)} & \shortstack{0.781\\(0.1)} & \shortstack{0.822\\(3.9)} & \shortstack{  0.82\\(25.5 )} & \shortstack{       Timeout }
				& \shortstack{0.82\\ (2000)} 
				\\ \hline
				ionosphere&350&564&
				\shortstack{  0.886\\(0.1)} & \shortstack{0.909\\(0.1)} & \shortstack{0.926\\(1.2)} & \shortstack{  0.909\\(1.3 )} & \shortstack{  0.886\\(0.1 )}
				& \shortstack{0.889\\ (15.04)}
				\\ \hline
				PIMA& 760&134&
				\shortstack{ 0.774\\(0.1)} & \shortstack{0.749\\(0.1)} & \shortstack{0.764\\(1.3)} & \shortstack{   0.761\\(1.3)} & \shortstack{   0.77\\(21.4 )}
				& \shortstack{0.736\\ (2000)} 
				\\ \hline
				parkinsons&190&392&
				\shortstack{ 0.868\\(0.1)} & \shortstack{0.884\\(0.1)} & \shortstack{0.921\\(1.2)} & \shortstack{   0.895\\(1.1)} & \shortstack{   0.879\\(1.6 )}
				&\shortstack{0.895\\ (245)} 
				\\ \hline
				Trans& 740&64&
				\shortstack{  0.78\\(0.0)} & \shortstack{0.759\\(0.0)} & \shortstack{0.788\\(1.2)} & \shortstack{  0.788\\(1.2 )} & \shortstack{0.765\\(372.3 )}
				& \shortstack{0.797\\ (1177)} 
				\\ \hline
				WDBC &560&540&
			\shortstack{  0.961\\(0.1)} & \shortstack{0.936\\(0.0)} & \shortstack{0.961\\(1.3)} & \shortstack{  0.943\\(1.4 )} & \shortstack{  0.955\\(3.0 )}
				& \shortstack{0.946\\ (911)} 
				\\ \hline
				\bottomrule
				\end{tabular}
				\caption{Comparison of classification accuracy with 10-fold cross validation for different classifiers. For every cell in the last five columns, the top value represents the accuracy, while the value sorrounded by parenthesis represent average training time.
					}
					\label{tab:performance}
					}
					\end{table*}
					
			\begin{table*}
				{\scriptsize
					\centering
					\begin{tabular}{@{}ccccc@{}}\toprule
						\textbf{Dataset} & \textbf{Size} & \textbf{\# Features}& \textbf{RIPPER} & \textbf{{\SLIC}} 
						\\ \hline
						TomsHardware& 28170 & 830 &
						57.5 & 
						 4
						\\ \hline
						Twitter& 49990 & 1050 &
						78.5 &
						15
						\\ \hline
						adult-data& 32560 & 262 &
						74.5 &
						51.5
						\\ \hline
						credit-card&  30000 & 334&
						7.5 &
						4
						\\ \hline
						ionosphere&350&564&
						3 &
						5.5
						\\ \hline
						PIMA& 760&134&
						5  & 9
							
						\\ \hline
						parkinsons&190&392&
						6.5& 
						6
						\\ \hline
						Trans& 740&64&
						6 & 
						4 
						\\ \hline
						WDBC &560&540&
						7.5 &
						3.5
						\\ \hline
						\bottomrule
					\end{tabular}
					\caption{Comparison of RIPPER vis-a-vis {\SLIC} in terms of the size of Rules.
						{\bfseries Note that despite using only a small number of literals, the proposed classifier, {\SLIC} mostly has better accuracy than RIPPER.}}
					\label{tab:ruleSize}
				}
			\end{table*}			
					
	\subsection{Experimental Methodology}
	We conducted extensive experiments on publicly available data sets obtained from UCI repository~\cite{blake1998uci}. The data sets involved both real- and categorical-valued features. Specifically, the specific datasets are:  buzz events from two different social networks: Twitter, Tom's Hardware,  Adult Data (adult\_data), Credit Approval Data Set (credit\_data), Ionosphere (Ionos), Pima Indians Diabetes (PIMA),  Parkinsons,  connectionist  bench  sonar  (Sonar),
	blood  transfusion  service  center  (Trans),  and  breast
	cancer Wisconsin diagnostic (WDBC).

	For purposes of comparison of the accuracy of {\SLIC}, we considered a variety of  popular classifiers: $\ell_1$-penalized Logistic regression (LogReg), Nearest neighbors
	classifier (NN), and the black box random forests (RF), and support vector classification (SVC).
	

	We perform 10-fold cross-validation to perform an assessment of accuracy on a validation set.  We compute the mean  across the 10 folds for each choice of a regularization (or complexity control) parameter for each technique (baseline and MLIC), and report the best cross-validation accuracy.    The number of parameter values is comparable (~10) for each technique.   For RF and RIPPER we use control based on the cutoff of the number of examples in the leaf node. For SVC and LogReg we discretize the regularization parameter on a logarithmic grid.   In case of {\SLIC} we have 2 choices of $\lambda \in  \{1,10\}$ and number of clauses, $k \in \{ 1,2,3\}$ and the type of rule as \{CNF, DNF\}.   We set the training time cutoff for each classifier (on each fold) to be 2000 seconds.   {\em Again, note that some classifiers can be much faster than others, but in this paper we focus on the best tradeoff of accuracy vs interpretability in mission-critical settings, and the training time (which can be off-line) is secondary, as long as it is realistic. In this context, note that testing time for each of these techniques is less than 0.01 seconds for a given set of labels.}

	\subsection{Illustrative Example}
	We illustrate the interpertable rules that are computed by {\SLIC} on the iris data set, which is a simple benchmark and widely used by machine learning community to illustrate new classification techniques. We consider the binary problem of
	classifying  iris  versicolor  from  the  other  two  species,
	setosa  and virginica. Of the four features, sepal length,
	sepal  width,  petal  length,  and  petal  width, we learn the following rule:
	$\Rule$:= 
	\begin{enumerate}
		\item (sepal length $>$ 6.3 $\vee$ sepal width $>$ 3.0 $\vee$ petal width $<=$ 1.5 ) $\wedge$ 
		\item ( sepal width $<=$ 2.7 $\vee$ petal length $>$ 4.0 $\vee$ petal width $>$ 1.2 ) $\wedge$
		\item ( petal length $<=$ 5.0) 
	\end{enumerate}
	
	Let us pause a bit to understand how to apply the above rule. The above rule implies that when the three constraints are satisfied, the flower must be classified as Iris otherwise, non-iris. The size of the above rule, i.e. $|\Rule| = \Sigma_{i} |C_i| = 3+3+1 = 7$.

	\subsection{Results}
	
	\begin{figure}
		
		\centering
		\includegraphics[scale=0.75]{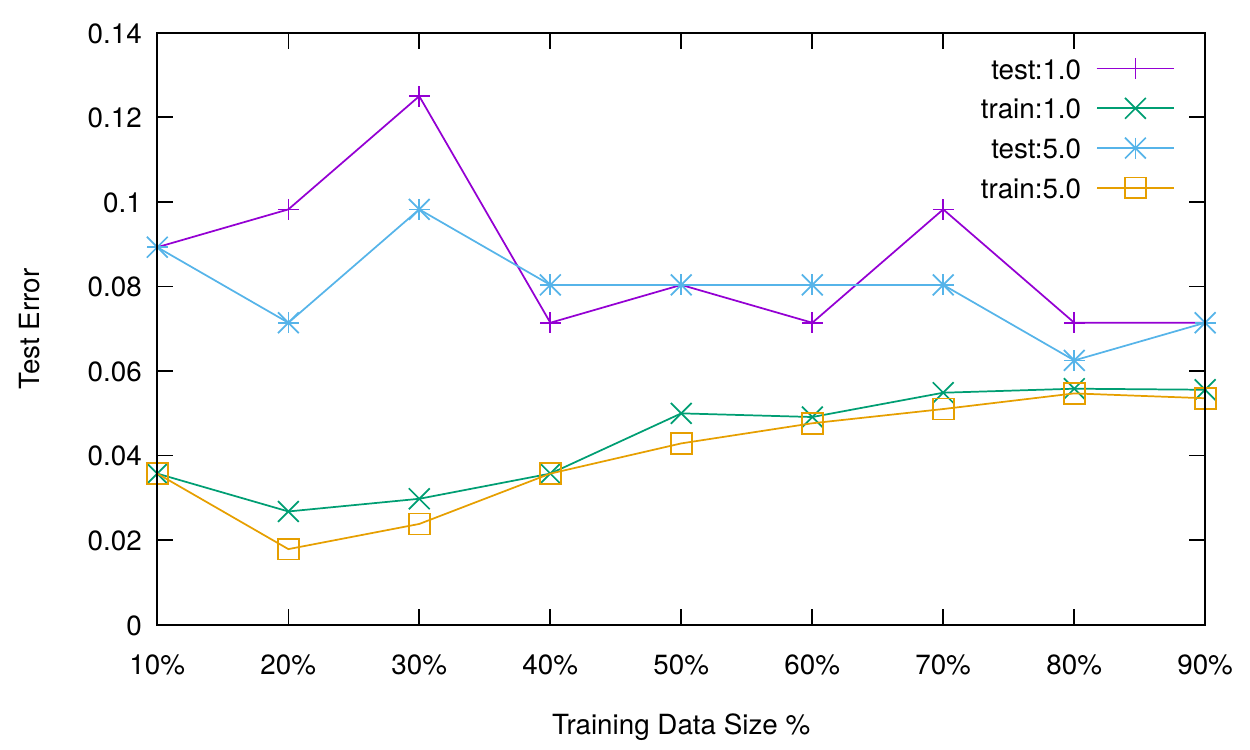}
		\caption{Plot demonstrating behavior of training and test accuracy vs Size of Training data for WDBC.}
		\label{fig:quality}
	\end{figure}

	Table~\ref{tab:performance} presents results of comparison of {\SLIC} vis-a-vis typical non-interpretable classifiers. The first three columns list the name, size (number of samples) and the number of binary features for each Dataset. The next five columns present test accuracy of the classifiers RIPPER, Logistic Regression (Log Reg), Nearest Neighbor (NN), Random Forest (RF), and SVC. The final column contain the median test accuracy for {\SLIC}. For every cell in the last five columns, the top value represents the accuracy, while the value sorrounded by parenthesis represent average training time. We draw the following two conclusions from the table: First, {\SLIC} is able to handle datasets with tens of thousands of examples with hundreds of features. The scalability of {\SLIC} demonstrates the potential presented by remarkable progress in SAT solving. Recent research efforts have often used {\NP}-hardness of the problem to justify the usage of heuristics but our experience with {\SLIC} shows that SAT solving is able to solve many large-scale problems directly. Note that when {\MaxHS} times out, it is able to provide the best solution found so far. In this context, it is worth noting that for some of the benchmarks, even state of the art classifiers such as SVC time out. Secondly, {\SLIC} is often able to achieve accuracy that is sufficiently close to accuracy achieved by 
	typical non-interpretable classifiers but produces easy to state rules that often have just a few literals.

	To demonstrate {\SLIC}'s ability to compute easy to state rules in comparison to the state of the art classifiers such as RIPPER, we computed the size of rules returned by RIPPER and {\SLIC}. Table~\ref{tab:ruleSize} presents results of comparison of {\SLIC} vis-a-vis RIPPER. The first three columns list the name, size (number of samples) and the number of binary features for each Dataset. The next two columns state the median size of rules returned by RIPPER and {\SLIC}. The size of a rule is computed as the number of terms involved in a rule. First, note that except for two cases where RIPPER has produced marginally shorter rules compared to {\SLIC}, {\SLIC} produces significantly shorter rules and sometimes, these rules could be orders of magnitude larger than those produced by {\SLIC}. For example,
	for Toms hardware, the rule produced by RIPPER has 57 terms compared to just 4 literals for {\SLIC}. Note that with {\SLIC} has better accuracy than RIPPER.  
	One might wonder if the rule learned by RIPPER could have been simply transformed into a sparser rule; it is not the case here. Furthermore, it is worth noting that RIPPER does not provide sound
	handle to tune rule size and therefore, user is left to trying out combination of input
	parameters without any guarantee of improvement of the interpretability of generated rules, which we experienced in this case. A in-depth study into failure of RIPPER to generate sparser rules than {\SLIC} is beyond the scope of this work.

	To measure the accuracy of {\SLIC} w.r.t. the size of training data, we consider test errors when only a fraction of training data is available (we vary it from $10$ \% to $90$ \% in steps of $10$ \%). .
	Due to lack of space, we present result for only one benchmark, WDBC, for $\lambda = 1 $ and 5 and $k=1$ in Figure~\ref{fig:quality}. We plot median training and test accuracy of {\SLIC} over 10 trials, which is also known as learning curve in machine learning literature. The y-axis represents the error as the ratio of incorrect predictions to total examples  while the x-axis represents the size of training set. The plot shows how training and test error vary for $\lambda = 1$ and 5. Note that {\SLIC} is able to achieve sufficiently high test accuracy with just 40\% of the 
	complete dataset.  We observe similar behavior for other benchmarks as well.  
	
	\begin{figure}
		\centering
		
		\includegraphics[scale=0.65]{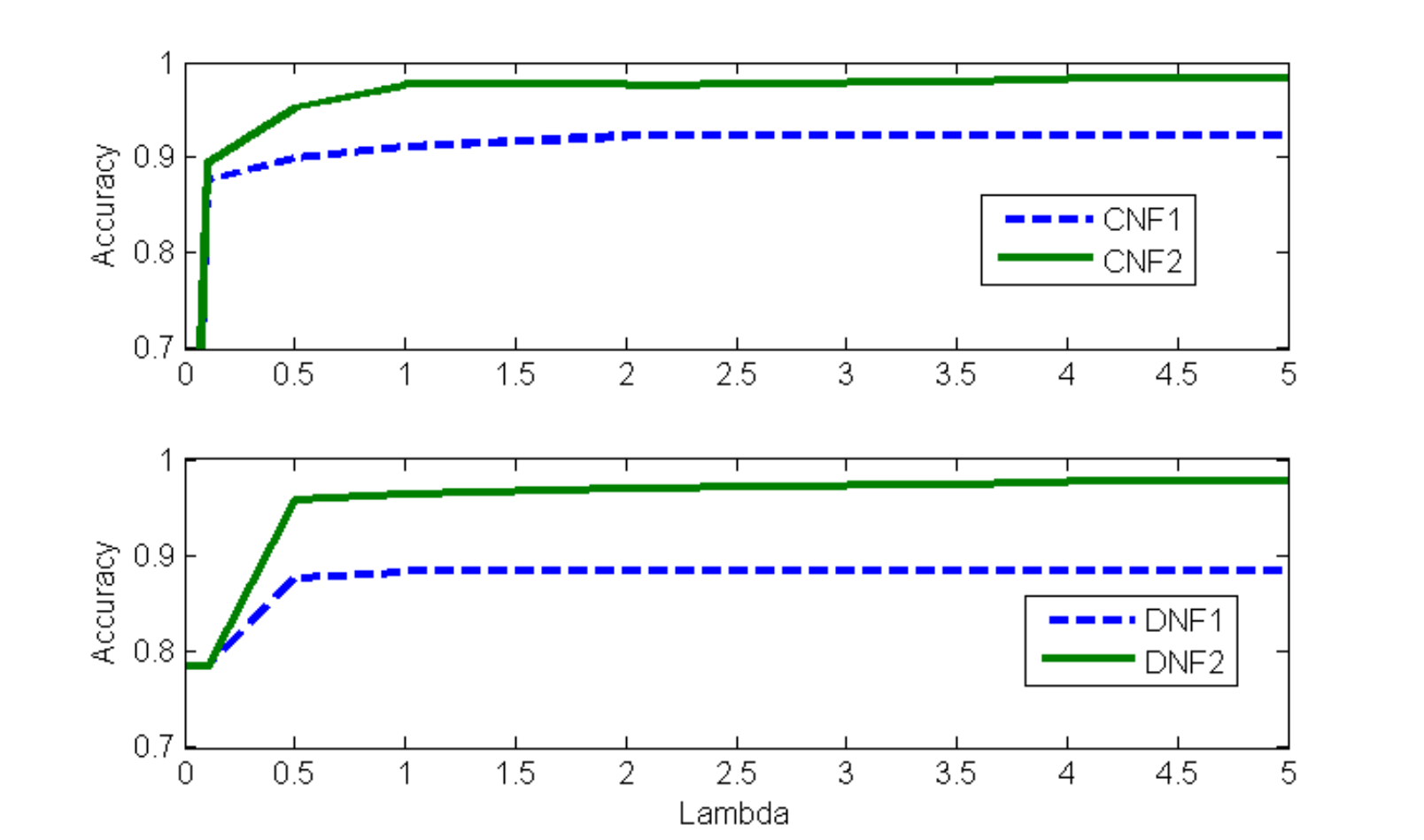}
		\caption{Plot demonstrating monotone behavior of training accuracy vs $\lambda$ for CNF and DNF rules with $k = 1$ and $2$.}
		\label{fig:acc-lambda}
	\end{figure}

	Figures~\ref{fig:acc-lambda} and~\ref{fig:rule-lambda} 
	illustrate how training accuracy and rule sizes vary with $\lambda$ for one of the representative benchmark, parkinsons. 
	CNF1, CNF2, DNF1, DNF2 refer to invocations of {\SLIC} with (rule type, k) set to (CNF, 1), (CNF, 2), (DNF,1), and (DNF,2) 
	respectively. For each of the plots, x-axis refers to the value of $\lambda$ while y-axis represents Rule size (i.e. $|\Rule|$)  and accuracy for  Figure~\ref{fig:rule-lambda}  and Figure~\ref{fig:acc-lambda} respectively.  First, note that for both CNF and DNF, the accuracy of rules is generally 
	higher for larger k. Significantly, the plots clearly demonstrate monotonicity of rule size and accuracy with respect to $\lambda$.  In contrast, the state of the art interpretable classifier, RIPPER, can lead to rules that can be order of magnitude larger than those produced by {\SLIC}. For example, for Toms 
	hardware, the rule produced by RIPPER has 57 terms compared to just 4 literals for {\SLIC}. In this context, it is worth noting that RIPPER does not provide sound handle to tune rule size and therefore, user is left to trying out combination of input parameters without any guarantee of improvement of the interpretability of generated rules. 
		\begin{figure}
		\centering
		\includegraphics[scale=0.65]{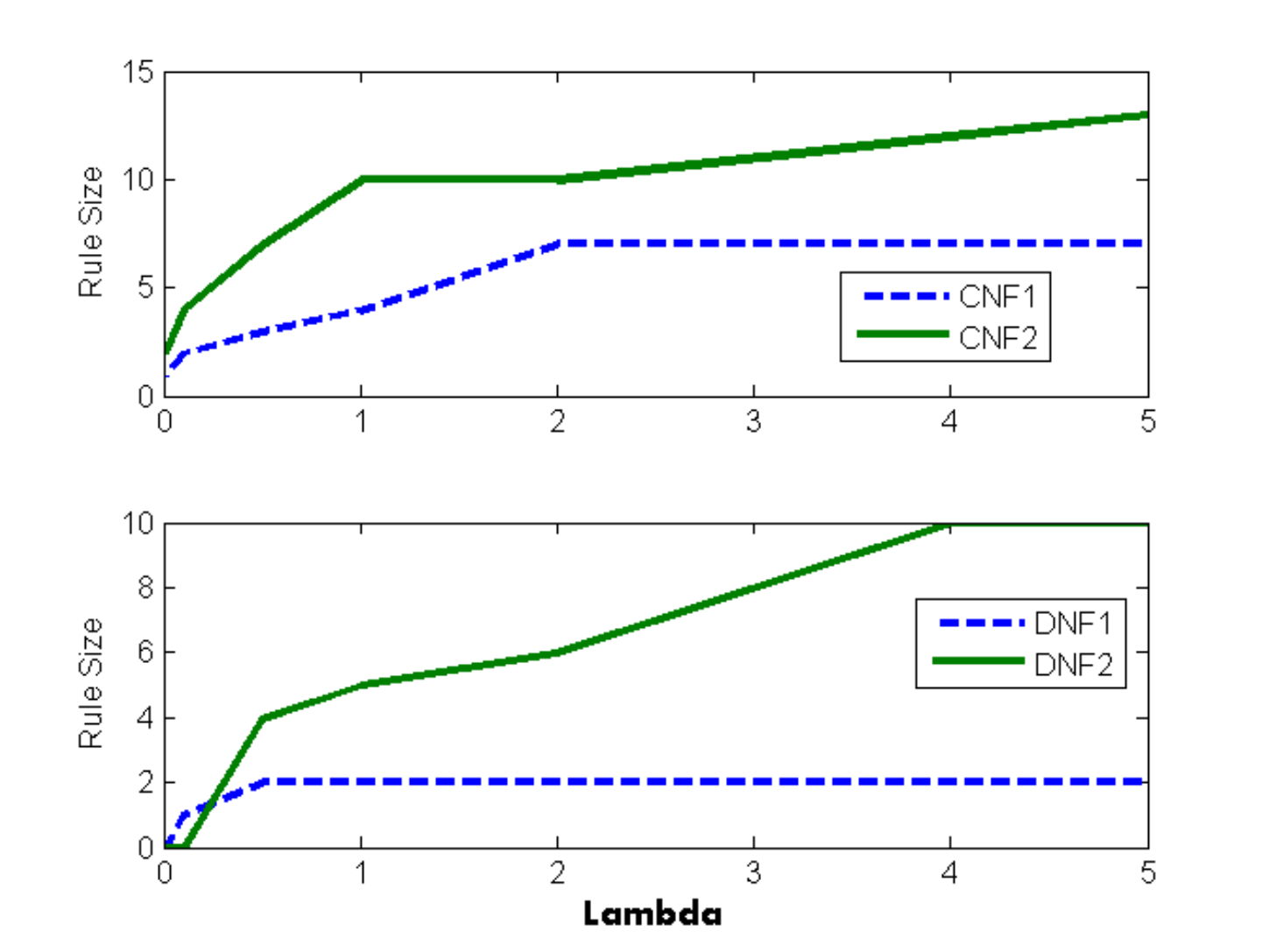}
		\caption{Plot demonstrating behavior of rule size vs $\lambda$}
		\label{fig:rule-lambda}
	\end{figure}
				

\section{Related Work}\label{sec:relatedWork}

There is a long history of learning interpretable classification models from data, including 
popular approaches such as decision trees \cite{BHO09,quinlan2014}, decision lists \cite{Rivest1987}, 
and classification rules \cite{Cohen1995}. While the form of such classifiers is highly amenable
to human interpretation, unfortunately, most of the objective functions that arise
for these problems are intractable combinatorial optimization problems. Hence, 
most popular existing approaches rely on various greedy heuristics, pruning, and ad-hoc local criteria such as maximizing information gain, coverage, e.t.c. 
For example vaious popular decision rule approaches, such as C4.5.rules \cite{quinlan2014}, 
CN2 \cite{ClarkN1989}, RIPPER \cite{Cohen1995}, SLIPPER \cite{CohenS1999}, all make different 
trade-offs in how they use these heuristic criteria for growing and pruning the rules. 

Recent advances in large-scale optimization and scalable Bayesian inference gave rise to state-of-the-art black box models. However, many of the same advances can also be used in the context of interpretable machine learning models. Some of such
recent proposals include Bayesian approaches \cite{LethamRMM2012,wang2015bayesian}, constraint programming~\cite{VH15}, integer
programming approaches to learn decision trees \cite{BertsimasCR2012}, quadratic programming
relaxation with a variance-penalized margin objective \cite{RuckertK2008}. Greedy 
approaches are used with a principled objective function in ENDER \cite{dembczynski2010ender}
and Set covering machines \cite{marchand2002SetCoverMachine}. \cite{jawanpuria2011KernelRules}  propose a hierarchical kernel learning approach and \cite{friedman2008predictive} use optimization
to combine basic Boolean clauses obtained from decision trees.  Linear Programming relaxations
based on Boolean Compressed Sensing formulation have been used to learn sparse interpretable 
rules and checklists\footnote{Note, however, that the objective functions for the integer program and the LP relaxation in these papers are not the same as sparsity-penalized cost-sensitive 
	classification error. } in \cite{malioutov2013exact,emad2015semiquantitative}. Prior work has considered applications of constraint programming to learning Bayesian networks \cite{VH15} and itemset mining \cite{deRaedt2008constraint,NGD09}. In contrast, we focus on learning sparse interpretable classification rules allowing control of accuracy vs. interpretability.

\section{Extensions}

In the paper, we have focused on decision rules in the DNF or CNF form, which is among the most interpretable
classification methods available.  We now describe a few related classification formulations, which are also amenable to being learned from data using a SAT-based framework. A simple AND-clause can be considered
as a requirement that all of the N literals in the clause are satisfied, while a simple OR-clause requires
that at least 1 of the N literals are satisfied. A useful generalization is a ``K-of-N'' clause \cite{CS96}, 
which is true when at least K of the N literals are satisfied. In particular, it leads to a very popular decision rubric called {\em checklists} or {\em scorecards}, widely used in medicine and finance, where a questionnaire asks some questions (e.g., risk factors), and the total number of positive answers is compared to a pre-determined threshold. LP relaxations have been considered for learning scorecards from data \cite{EVM15}, 
and our {\MaxSAT}-based framework can be directly extended. In the case of multi-class classification, a decision rule may be ambiguous, as it does not specify what multi-class label to use when several contradictory clauses pointing to different
labels are satisfied simultaneously. Decision lists \cite{Rivest1987} enforces an order of evaluation of the rules, resolving this ambiguity. Bayesian frameworks for learning decision lists have been considered recently \cite{LethamRMM2012}. 
Perhaps the most well known interpretable classification scheme is a decision tree, where literals are arranged as nodes in a binary tree, and a decision is made by following the path from the root node to one of the leafs. The decision tree can be converted to an equivalent set of classification rules which correspond to all the paths from the root to the leafs, a more expensive representation. On the other side, however, certain small decision rules can lead to very complex decision trees, for example, the "K-of-N" rule cannot be efficiently encoded using a decision tree. 
Recent work has considered combinatorial optimization to learn compact interpretable decision trees \cite{BertsimasCR2012}. 
Beyond simple Boolean expressions, a variety of weighted classification methods can be used, for example, a weighted
linear combination of simple AND clauses -- for instance by using Boosting on a set of classifiers based on simple logical clauses.  In future work, we plan to extend our {\MaxSAT}-based framework for all these related interpretable classification approaches.


\section{Conclusion}\label{sec:conclusion}

We proposed a new approach to learn interpretable classification rules via reduction to ({\MaxSAT}). Due to the impressive advances in {\MaxSAT}-solving, our formulation can find optimal or near-optimal rules
balancing accuracy and interpretability (sparsity) for large data-sets involving
tens or hundreds of thousands of data points, and hundreds or thousands of features. 
Furthermore, the approach separates the modeling from the optimization, and
this framework could be used to solve a wide variety of interpretable classification
formulations, including decision lists, decision trees, and decision rules with 
different cost functions (including group-sparsity, sharing of the variables, and having 
prior knowledge on variable importance). Finally, we demonstrate on experiments that
for many classification problems interpretability does not have to come at a high 
cost in terms of accuracy. 

Furthermore, we hope to share our excitement with applications of constraint programming/{\MaxSAT} in Machine Learning, and to encourage researchers in both interpretable classification and in the CSP/SAT communities to consider this topic further: both in developing new SAT-based formulations for interpretable ML, and in designing bespoke solvers attuned to the problem of interpretable ML. 

\paragraph{Acknowledgements}
This work was supported in part by NUS ODPRT Grant, R-252-000-685-133 and IBM PhD Fellowship. The computational work for this article was  performed on resources of the National Supercomputing Centre, Singapore \url{https://www.nscc.sg}

	{

	}
\end{document}